\DeclareMathOperator{\E}{\mathbb{E}}
\DeclareMathOperator{\IndicatorFunc}{\mathbf{1}}
\title{Conditional Kernel Imitation Learning for Continuous State Environments}
\author{
    %Authors
    % All authors must be in the same font size and format.
    Rishabh Agrawal\textsuperscript{\rm 1},
    Nathan Dahlin\textsuperscript{\rm 2},
    Rahul Jain\textsuperscript{\rm 1},
    Ashutosh Nayyar\textsuperscript{\rm 1}
}
\title{My Publication Title --- Single Author}
\author {
    Author Name
}
\title{My Publication Title --- Multiple Authors}
\author {
    % Authors
    First Author Name,\textsuperscript{\rm 1,\rm 2}
    Second Author Name, \textsuperscript{\rm 2}
    Third Author Name \textsuperscript{\rm 1}
}
\newtheorem{theorem}{Theorem}
\newtheorem{lemma}[theorem]{Lemma}
\newcommand{\nate}[1]{{\textcolor{red}{Nate says: {#1}}}}
\newcommand{\rjain}[1]{{\textcolor{blue}{rjain says: {#1}}}}
\newcommand{\rishabh}[1]{{\textcolor{green}{rishabh says: {#1}}}}
\renewcommand{\hat}{\widehat}
\newenvironment{customlegend}[1][]{%
    \begingroup
    \let\addlegendimage=\pgfplots@addlegendimage
    \let\addlegendentry=\pgfplots@addlegendentry
    % inits/clears the lists (which might be populated from previous axes):
    \pgfplots@init@cleared@structures
    \pgfplotsset{#1}%
}{%
    % draws the legend:
    \pgfplots@createlegend
    \endgroup
}%
\definecolor{bcColor}{RGB}{0, 255, 0}
\definecolor{vdiceColor}{RGB}{255, 0, 0}
\definecolor{rcalColor}{RGB}{110,117,14}
\definecolor{edmColor}{RGB}{0,255,255}
\definecolor{avrilColor}{RGB}{255,0,255}
\definecolor{dsfnColor}{RGB}{0, 0, 255}
\definecolor{ckilColor}{RGB}{94,60,153}
\definecolor{expertColor}{RGB}{240,100,10}
\definecolor{randomColor}{RGB}{0,0,0}
\pgfplotsset{compat=1.18}
\begin{document}

\maketitle

\begin{abstract} Imitation Learning (IL) is an important paradigm within the broader reinforcement learning (RL) methodology. Unlike most of RL, it does not assume availability of reward-feedback. Reward inference and shaping are known to be difficult and error-prone methods particularly when the demonstration data comes from human experts. Classical methods such as behavioral cloning and inverse reinforcement learning are highly sensitive to estimation errors, a problem that is particularly acute in continuous state space problems. Meanwhile, state-of-the-art IL algorithms convert behavioral policy learning problems into distribution-matching problems which often require additional online interaction data to be effective. In this paper, we consider the problem of imitation learning in continuous state space environments based solely on observed behavior, without access to transition dynamics information, reward structure, or, most importantly, any additional interactions with the environment. Our approach is based on the Markov balance equation and introduces a novel conditional kernel density estimation-based imitation learning framework. It involves estimating the environment's transition dynamics using conditional kernel density estimators and seeks to satisfy the probabilistic balance equations for the environment. We establish that our estimators satisfy basic asymptotic consistency requirements. Through a series of numerical experiments on continuous state benchmark environments, we show consistently superior empirical performance over many state-of-the-art IL algorithms.
%Additionally, we augment the objective function 
% by incorporating a term that maximizes with an entropy bonus term, encouraging the agent to solve the task while exhibiting as much randomness as possible in its actions. 
%We compare the performance of our algorithm with other state-of-the-art (IL) algorithms across various control tasks. The experimental results demonstrate that our approach consistently outperforms the related algorithms in all tested scenarios.

\end{abstract}

\section{Introduction}

Reinforcement Learning (RL) has produced a series of breakthroughs over the last decade from exceeding human proficiency at playing simple games such as in the Atari suite \cite{mnih2015human} to Go \cite{silver2016mastering} and StarCraft \cite{vinyals2019grandmaster}, and to protein structure prediction systems \cite{jumper2021highly}, etc. A fundamental premise that the `reward is enough’ \cite{silver2021reward} underlies all of such RL methodology. And yet, in most problems, a natural reward function is not available. Nor it may be possible to engineer one from intuition. Thus, a lot of effort is spent on reward shaping \cite{rewardShaping} to make RL algorithms work, often without success.

This problem is particularly acute when humans are in the loop, either as demonstrators or as evaluators. Often, demonstration data comes from human experts and it is impossible to infer precisely what reward function human experts really have in mind while taking actions. To be fair, several inverse RL (IRL) algorithms such as MaxEntropy-IRL \cite{maxEntRL} use  a methodology wherein a reward function is first inferred from the demonstration data, and then used in conjunction with RL algorithms to design near-optimal policies. This has two lacunae. First, the performance of the RL algorithms can be very sensitive to errors in the reward function estimate. And second, the expert may not be using a policy that is optimal with respect to any reward objective at all! There is thus a need to develop imitation learning algorithms that do not depend on reward inference as a first step \cite{aroraAndDoshi2021}.

Behavioral Cloning (BC) is a simple and natural idea for imitation learning \cite{firstBC}. It is a supervised learning scheme that aims to learn the expert policy as a map from states to actions. In fact, it largely ignores the inherent sequential nature of reinforcement learning problems. Unfortunately, it suffers from severe covariate shift issues as it fails to generalize to less-visited parts of the state space and ignores the sequential decision-making aspect. This results in propagation of errors in the agent's performance \cite{bcLimitation1} resulting in a limited practical ability to generalize effectively. 

To address the issue of compounding errors that can afflict methods like BC, IRL algorithms \cite{AndrewNG1, AndrewNG2, gameTheoreticAL, maxEntRL, nonLinearIRL} take a different approach by learning a reward function that takes into account that transitions come from trajectories. But this requires use of reinforcement learning algorithms, making them extremely expensive to run, and at the same time sensitive to reward function estimation errors. 

In contrast, adversarial Imitation Learning (AIL) is a technique centered around distribution matching through adversarial learning. It gained significant traction in the recent past as an approach to imitation learning \cite{gail, fu2018AIL, fDivAIL}. Within this framework, the objective transforms into finding a behavioral policy that minimizes the divergence between the target distribution and the distribution of state-action pairs generated by the behavioral policy during its interaction with the environment. The primary drawback of current distribution matching methods via AIL is that estimating distribution density ratios, a crucial step in each iteration, usually demands samples from the behavioral policy distribution. Consequently, new interactions with the environment are necessary for every behavioral policy update iteration. This limitation makes these algorithms unsuitable for problems where only offline data is available. This downside is even more apparent in continuous state and action space problems wherein each visited state is visited at most once, with most states not being visited at all in the demonstration data.

A related strand of literature on imitation learning \cite{BCMod1} assumes access to a generative model so trajectory data can be generated on the fly. We make no such assumption in our problem formulation.

In this paper, we aim to develop imitation learning algorithms that do not need reward-feedback, do not rely on distribution matching between occupation measures, are not doing behavioral-cloning but use the `meta'  knowledge that the underlying dynamics are Markovian, do not need access to a generative model, work for continuous state space problems, and allow for batch processing of the offline dataset.  This version of the imitation learning problem is relevant in many real-world decision-making applications, such as medical, healthcare, robotic manipulations, and autonomous vehicles \cite{ILSurveyAutonomous}, where experimentation is either costly or unsafe.

We introduce a simple and natural framework based on a simple premise. Namely, that the demonstration trajectory data satisfies the balance equation between the demonstration policy, the Markov decision process (MDP) transition density and that of the induced Markov chain. This allows us to incorporate the fact that the demonstration data is coming from a system with Markovian dynamics. The transition densities for the MDP and the Markov chain are then estimated using conditional kernel density estimators which are universal density estimators with proven asymptotic consistency properties. We start with the discrete state and action space setting, but then show that the framework extends to the continuous state space setting as well. We prove consistency properties of the estimators, and validate the algorithm in a series of continuous state space problems. Conceptual extension to continuous action space is straightforward but requires more work for numerical robustness.

\subsubsection*{Related Work.} We now discuss prior work broadly related to our paper. As already mentioned behavioral cloning faces a fundamental limitation due to  discarding distributional insights from the demonstrations \cite{bcLimitation1, bcLimitation2}. To address this,  several remedies have been suggested \cite{BCMod1, BCMod2} which involve either further online interactions with the environment, or the demonstrator,  or using  insights into model dynamics or the sparsity of rewards, all of which are in general impractical. The recent work \cite{discriminatorWeighted} aims to overcome these by using additional data from non-expert policies, which circumvents the need for additional online interactions but the additional offline data may not be available. The EDM approach \cite{SBIL} captures the expert's state occupancy measure by training an explicit energy-based model but its limitations has been scrutinized in \cite{sbilCritique}.

There also have been  efforts to further develop IRL approaches to overcome the limitations of earlier algorithms. \cite{klein2011,klein2012} introduce \textit{LSTD-$\mu$}, a temporal difference technique for calculating feature expectations. 
%Building on this, \cite{klein2012}  propose a linearly parameterized score function-based multi-class classification algorithm to derive a reward function from an estimated expert feature expectation. 
However, these approaches share the weaknesses of least squares estimators, being highly sensitive to basis features and training data distribution. \cite{lee2019} propose \textit{DSFN}, which estimates feature expectations in an off-policy setting. They also propose a transition-regularized imitation network that produces an initial policy close to expert behavior and an efficient feature representation. Despite these advancements, the assumption of complete knowledge about reward feature functions in these methods can often be unrealistic, especially for complex problems \cite{aroraAndDoshi2021}. \cite{bcLimitation2} introduced \textit{RCAL}, a non-parametric algorithm that employs a boosting method to minimize the criterion directly without feature selection steps and can help tackle some of the above issues. \cite{chan2021} propose \textit{AVRIL}, adopting a variational approach to jointly learn an approximate posterior distribution over reward and policy. 
%\cite{Garg2021} introduce IQ-Learn, an off-policy IRL method that implicitly represents reward and policy using a learned soft Q-function. 
However, due to inherent covariate shift problems, these methods encounter significant reward extrapolation errors, leading to misguided outcomes in novel environments and reduced learning efficiency. To address this, the \textit{CLARE} \cite{clare} model-based offline Inverse Reinforcement Learning (IRL) approach introduces conservatism to its estimated reward. It employs an IRL algorithm within an estimated dynamics model to learn the reward. 
%The objective is to align the state-action visitation measure with the joint data distribution of collected samples and expert demonstrations to avoid distribution shifts. 
However, limitations arise when dealing with a limited number of expert demonstrations or predominantly low-quality transition samples from a behavior policy. In such cases, forcing alignment of the empirical state-action visitation measure across all data can lead to a recovered reward or policy that mimics the suboptimal behavior policy, undermining the accuracy of the expert model \cite{Siliang}.

Adversarial Imitation Learning (AIL) approaches \cite{gail} were a breakthrough when they were introduced a few years ago \cite{AILMod1, AILMod2}. However, these approaches require online interactions with the environment, and thus are not applicable when we must work only with offline data.  Employing a distribution matching strategy, \cite{VDICE} introduces \textit{ValueDICE}, an offline objective for assessing the distribution ratio between the imitator and expert policies. Although theoretically allowing for comprehensive offline learning, the approach undertakes a complex alternating maximization-minimization optimization procedure. Additionally, it suffers from difficulty estimating the expectation of an exponential that introduces bias when approximating gradients using mini-batches \cite{SBIL}.

Thus, the algorithm we present in this paper is quite distinct in its approach from most of the prior literature. Additionally, it demonstrates promising preliminary empirical results.

\section{Preliminaries}\label{sec:prelim}

\subsection{The Imitation Learning Problem}

%\mu_{\circ}$
An infinite horizon discounted Markov decision process (MDP) $M$ is defined  by the tuple ($S,A,T,r,\gamma$) with states $s\in S$, actions $a\in A$ and successor states $s' \in S$ drawn from the transition function $T(s'\vert s, a)$. The reward function $r : S \times A \rightarrow \mathbb{R}$ maps state-action pairs to scalar rewards, and $\gamma$ is the discount factor. 
% that helps balance the trade-off between short-term and long-term gains. 
Policy $\pi$ is a probability distribution over actions conditioned on state and is given by
$\pi(a_{t}|s_{t}) = P_{\pi}(A_{t} = a_{t} | S_{t} = s_{t})$, where $a_{t}\in A$, $s_{t}\in S$, $\forall t = 0, 1, 2, \cdots$.
%at time$ $t$
%\rjain{ambiguous. better $P_{\pi}(s_t=s,a_t=a, t=0, 1, \infty)$, which depends on the policy $\pi$}. 
The induced occupancy measure of a policy is given as $\rho_{\pi}(s, a) := \E_{\pi}[\sum_{t=0}^{\infty} \gamma^{t} \IndicatorFunc_{s_{t} = s, a_{t} = a}]$, where the expectation is taken over $a_{t} \sim \pi(\cdot|s_{t})$, $s_{t+1} \sim T(\cdot|s_{t}, a_{t})$ for all $t$, and the initial state $s_0$. The corresponding state-only occupancy measure is given as $\rho_{\pi}(s) = \sum_{a} \rho_{\pi}(s,a)$.

In the imitation learning (IL) framework,
% or inverse reinforcement learning setting, 
the agent is provided with trajectories generated by a demonstration policy $\pi_D$, collected as $D=\{(s_0,a_0), (s_1,a_1), (s_2,a_2),...\}$. 
% in a given environment. 
The data $D$ does \emph{not} include the corresponding reward information $r_t$ at each time step. Indeed, rather than finding the optimal policy that maximizes the long-term reward, 
% The goal is to train an agent to achieve a performance close to the expert without any prior knowledge of an environment other than the trajectories. 
%In general, a given state is associated with a set of features, $f_{s_j}$. The reward is then a function of the features of the state, $r(f_{s_j})$.
% In the context of imitation learning, 
the IL objective is to learn a policy $\pi^{\star}$ that is close to $\pi_D$ in the following sense \cite{ImitationLearningPresentation}:
% that is formulated as follows 
\begin{equation}
    \pi^{\star}\in \underset{\pi \in \Pi}{\arg\min}\,\mathbb{E}_{s \sim \rho_\pi} [\mathcal{L}(\pi(\cdot|s), \pi_D(\cdot|s))],
    \label{eq:ILObjective}
\end{equation}
where $\Pi$ is the set of all randomized (Markovian) stationary policies,
%\nate{Say something about the set of candidate policies in \eqref{eq:ILObjective}?}
and $\mathcal{L}$ is a chosen loss function. In practice, \eqref{eq:ILObjective} can only be solved approximately in part due to the assumption that $\pi_D$ is unknown and only observed via the finite dataset $D$. 
% in part due to the assumption that  However, we can only access a dataset $D$ comprising pairs of states and actions, denoted as $(s, a) \sim \rho_{\pi_{D}}$, instead of directly accessing $\pi_D$. 

\subsection{Conditional Kernel Density Estimation (CKDE)} \label{sec:ckde}

The imitation learning approach we will introduce will depend on transition density estimation. Though, statistical theory exists for it, conditional density estimation is a difficult problem due to lack of clarity on what parametric families of density functions are good candidates. Thus, we adopt the kernel density estimation framework \cite{kernelSmoothingBook} that are provably universal probability density estimators. 
%Central to the IL approach we explore in this work is the estimation of conditional probability densities. To this end, we adopt the \emph{kernel density estimation} approach \cite{kernelSmoothingBook}. 

We next outline the method in the case of two continuous scalar random variables, $X$ and $Y$ for the sake of simplicity. Let $f$ and $g$ denote the joint density of $(X, Y)$ and the marginal density of $X$, respectively.
%\rishabh{Break the sentence}
The conditional distribution of $Y$, given $X$, is denoted as $h_{Y|X}(y|x) = f_{X, Y}(x, y)/g_{X}(x)$.   

Selecting a pair of kernel functions $K\,:\,\mathbb{R}\to\mathbb{R}$ and $K'\,:\,\mathbb{R}\to\mathbb{R}$ with respective scalar bandwidth parameters $h>0$ and $h'>0$ and given a set of $n$ samples $\{(x_{i} , y_{i} )\}_{i=1}^{n}$, the kernel density estimation (KDE) approximations $\hat{f}$ and $\hat{g}$ for the joint and marginal distributions, respectively, are obtained as follows:
\begin{equation}
    \begin{gathered}
        \hat{f}_{X, Y}(x, y) = \frac{1}{n} \sum_{i=1}^{n} \frac{1}{h}K\bigg(\frac{x - x_i}{h}\bigg)\frac{1}{h'}K'\bigg(\frac{y - y_i}{h'}\bigg),
        \\
        \hat{g}_{X}(x) = \frac{1}{n} \sum_{i=1}^{n} \frac{1}{h}K\bigg(\frac{x - x_i}{h}\bigg).
    \end{gathered}
    \label{eqn:IndividualKernel}
\end{equation}
% We denote $\hat{f}$ and $\hat{g}$ as representations for kernel estimators of these densities. 
Using the approximations in \eqref{eqn:IndividualKernel}, the approximate conditional density $\hat{h}_{Y|X}$ can be computed as
\begin{equation}
    \hat{h}_{Y|X}(y|x) = \frac{\hat{f}_{X, Y}(x, y)}{\hat{g}_{X}(x)}.
    \label{eqn:ConditionalKernelBasicEqn}
\end{equation}

% Here, $K$ and $K'$ represent kernel functions, each with bandwidths $h_{X}$ and $h_{Y}$, respectively, where $h_{X}, h_{Y} > 0$. Finally, the conditional density is then estimated by computing the ratio of the estimators given in Eq. (\ref{eqn:IndividualKernel}) as follows:

% \begin{equation}
%             \hat{h}_{Y|X}(y|x) = \frac{\hat{f}_{X, Y}(x, y)}{\hat{g}_{X}(x)} = \frac{1}{h_{Y}}\frac{\sum_{i=1}^{n} K\big(\frac{x - x_i}{h_{X}}\big)K'\big(\frac{y - y_i}{h_{Y}}\big)}{\sum_{i=1}^{n} K\big(\frac{x - x_i}{h_{X}}\big)}.
%     \label{eqn:ConditionalKernel}
% \end{equation}

In more general cases involving random vectors, analogous estimates to those in \eqref{eqn:IndividualKernel} and \eqref{eqn:ConditionalKernelBasicEqn} may be obtained using kernel functions defined according to 
\begin{equation}\label{eqn:K_H_def}
    K_{H}(x) = |H|^{-\frac{1}{2}} K(H^{-\frac{1}{2}} x),   
\end{equation}
where $H$ is a symmetric positive definite \textit{bandwidth matrix} of appropriate dimension, $m$, with determinant $|H|$, and $K$ is a real-valued function satisfying $\int_{\mathbb{R}^m} K(x)dx = 1$. For example, the KDE estimate for the marginal distribution of random vector $X$ is defined as
\begin{equation}
    \hat{g}_{X}(x; H) = \frac{1}{n} \sum_{i=1}^{n} K_{H}\big(x - x_i\big).
    \label{eqn:MultivariateConditionalKernel}
\end{equation}
%\rishabh{Should we write capital X and $X_i$ in \eqref{eqn:MultivariateConditionalKernel} as they are now vector instead of scalar}
% For the most general case, the $d$-dimensional kernel density estimator is given as:

% \begin{equation}
%     \hat{g}_{X}(x; H) = \frac{1}{n} \sum_{i=1}^{n} K_{H}\big(x - x_i\big)
%     \label{eqn:MultivariateConditionalKernel}
% \end{equation}

% where $H$ is a symmetric positive definite $d \times d$ matrix called the \textit{bandwidth matrix}, where

% and $K$ is a $d$-variate kernel function that satisfies
% \begin{equation*}
%     \int K(x) dx = 1.
% \end{equation*}
An example of such a multivariate kernel function is the standard $m$-variate normal density function
\begin{comment}
\[
K(x) : = \frac{1}{2} \exp\left(-\frac{||x||^2}{2}\right).
\]
\end{comment}
\[
K(x) : = (2\pi)^{-\frac{m}{2}} \exp\left(-\frac{x^{T}x}{2}\right).
\]

%\rjain{check!} 
For more details on kernel density estimation, see \cite{multivariateKernelBook}. 

\section{Conditional Kernel Imitation Learning}\label{sec:algo}

% In this section, we delve into a comprehensive elucidation of our framework. We begin by discussing the balance equation of the environment. We examine the intricacies of estimating transition probabilities, a critical aspect that arises within the framework of the balance equation.

We next describe our imitation learning algorithm. A key premise of our algorithm is that the demonstration trajectories from an expert must satisfy the Markov balance equation under the demonstrator's policy $\pi_D$, and thus we must use that to guide the agent's learning. The use of the balance equation then requires estimation of certain transition (conditional probability) density functions which we do via conditional kernel density estimation methods. Then, the problem reduces to identifying policies that best fit the balance equation. We elucidate this procedure in Algorithm \ref{alg:CKIL}, prove its theoretical properties and then present numerical evidence of its efficacy on a number of benchmark environments.

%and how it can be used to guide the agent's learning. The use of balance equation requires estimation of certain transition probabilities and we describe our framework for doing so in Section \ref{sec:TDE}.
%\rishabh{Add estimation line to make it clearer}We also provide a pseudo-code representation of our algorithm and establish theoretical guarantees  for the estimations performed within it.

\subsection{The Markov Balance Equation}

\begin{comment}
Fixing a policy $\pi$, for a given state, action, next state, next action tuple $(s,a,s',a')$ the transition probabilities can be written as:
\rishabh{TO DO: Remove this as this is redundant}
\begin{equation}
    P_{\pi}(s', a'|s,a) = P_{\pi}(a'|s', s, a) P(s'|s,a).
\label{eqn:conditionalProbaility}
\end{equation}
% Here, $P((s', a')|(s,a))$ signifies the likelihood of transitioning to state $s'$ and taking action $a'$ from the state $s$ where action $a$ was taken.

Restricting consideration to Markovian policies, $P_{\pi}(a'|s', s, a)=\pi(a'|s')$, and by the definition of $T$ \eqref{eqn:conditionalProbaility} can be written 
\begin{equation}
    P_{\pi}(s', a'|s,a) = \pi(a'|s')T(s'|s,a).
    \label{eqn:policy_balance}
\end{equation}
\end{comment}

% Under a fixed policy $\pi$, the state-action pair and a tuple $(s, a, s', a')$ comprising a state, an action, the next state, and the subsequent action, the transition probabilities can be expressed using the definition of $T$ as:

%Under the demonstration policy $\pi_D$,  state-action pair $(s,a)$ evolves as a Markov chain  with transition probabilities that depend on the MDP's transition model $T$ and the policy $\pi_D$. Specifically, the transition probabilities of the $(s,a)$ Markov chain are given as:
Consider a demonstration policy $\pi_D$ that is used to take actions starting from an initial state $s_0$. Let $T(s'|s,a)$ denote the transition density function of the MDP. Note that $\pi_D(a|s)$ is a randomized Markovian, stationary policy that denotes the probability of taking action $a$ in state $s$. This will induce a Markov chain on the state space $S$. Let its transition density be denoted $P(s'|s)$. Then, the Markov balance equation is given by
\[
P(s'|s) = \sum_a \pi_D(a|s)T(s'|s,a).
\]
Unfortunately, this involves a sum, and hence is difficult to use. We thus, use the following alternate form which is a transition density of the induced Markov chain on the state-action space,
\begin{equation}
    P_{\pi_D}(s', a'|s,a) = \pi_D(a'|s')T(s'|s,a).
    % P(s', a'|s,a) = \pi_D(a'|s')T(s'|s,a).
    \label{eqn:policy_balance}
\end{equation}

The above balance equation is the basis of our imitation learning approach. If we can estimate $P_{\pi_D}$ and $T$ in \eqref{eqn:policy_balance} (estimates denoted by $\hat{P}$ and $\hat{T}$ respectively), we can then infer a policy $\pi_D$ that satisfies it. Unfortunately, the problem is ill-conditioned, and we will need to impose additional criterion such as a regularization term.

We consider a parametric class of policies for the agent parametrized by $\theta$  and formulate the following optimization problem: 
% Consequently, to uphold the equilibrium equation of the environment, our goal is to minimize the following objective function:
%\rishabh{change p, t to hat versions in below equation}
\begin{comment}
\begin{equation}
%\small
    \begin{gathered}
        \min_{\theta} 
        \int_{(s',a')} \int_{(s,a)} \big[P(s',a'|s,a) - \pi_{\theta}(a'|s')T(s'|s,a)\big]^{2}
        \\ \,d\mu(s,a) \,d\mu(s',a') 
        - \lambda\int_{s'} H(\pi_{\theta}, s') \,d\mu(s')
    \end{gathered}
    \label{eqn:kernelRegularizedObjective}
\end{equation}
\end{comment}
\begin{equation}
%\small
    \begin{gathered}
        \min_{\theta \in \Theta} 
        \int_{(s',a')} \int_{(s,a)} \big[\hat{P}(s',a'|s,a) - \pi_{\theta}(a'|s')\hat{T}(s'|s,a)\big]^{2}
        \\ \,d\mu(s,a) \,d\mu(s',a') 
        - \lambda\int_{s'} H(\pi_{\theta}(\cdot|s')) \,d\nu(s').
    \end{gathered}
    \label{eqn:kernelRegularizedObjective}
\end{equation}

In \eqref{eqn:kernelRegularizedObjective}, the first term ensures that the balance equation is satisfied approximately. The second term involves $H(\pi_{\theta}(\cdot|s'))$, which is the entropy of the probability distribution $\pi_{\theta}(\cdot|s')$ on actions when the state is $s'$. It penalizes less randomized policies in favor of highly randomized policies. $\lambda \geq 0$ is a regularization parameter that governs relative weight on the first and second terms. Here, $\mu$ and $\nu$ denote reference measures on state-action pairs and states respectively. For example, they can be the counting measures obtained from the dataset. $\Theta$ is a given parameter set. The parameters could be weights of a neural network, for example.
%\nate{need some relationship between $\mu(s',a')$ and $\mu(s')$, and somewhere mention what is used in practice/experiments}  
%$\lambda \geq 0$ is a regularization parameter that governs the relative significance of the entropy term compared to the balance equation satisfaction, thereby regulating the level of stochasticity present in the learnt policy. 

% As \eqref{eqn:policy_balance} shows, together with the environment transition function $T$, a policy $\pi$ induces a conditional distribution over state-action pairs.
%Our IL approach is motivated by the observation that both $P_{\pi_D}$ and $T$ may be estimated from trajectories produced by $\pi_D$. We denote these estimates by $\hat{P}$ and $\hat{T}$ respectively. Policies that are close to $\pi_D$ can then be identified as those which approximately link these estimates via \eqref{eqn:policy_balance}. We consider a parametric class of policies for the agent parametrized by $\theta$ (e.g. $\theta$ can represent the weights of a neural network) and formulate the following IL optimization problem: 
% This intuition underlies our use of the following IL objective function, where we simplify our notation as $P(s',a'|s,a)=P_{\pi_D}(s',a'|s,a)$ for all $(s,a,s',a')$, and use the estimates $\hat{P}$ and $\hat{T}$

\begin{algorithm*}[tb]
\caption{Conditional Kernel Imitation Learning (CKIL)}
\label{alg:CKIL}
\textbf{Input}: Expert dataset of trajectories $D$ = $\{(s_{i} , a_{i} )\}_{i=1}^{n}$\\
%\rishabh{Add a line about converting the trajectgories into (s, a), (s', a') pairs}
%\textbf{Parameter}: Optional list of parameters\\
\textbf{Output}: $\theta^{*}$%, solution to \eqref{eqn:kernelRegularizedObjective}
\begin{algorithmic}[1] %enables line numbers
%\STATE Let $t=0$.
\STATE Initialize policy parameter $\theta$\\
\STATE Transform dataset $D$ into $(s, a, s', a')$ tuples, then store them in buffer $B$.
%\STATE Generate a tuple 
\STATE Obtain $\hat{P}, \hat{T}$ in \eqref{eqn:kernelEstimators} via CKDE on $B$\\
\FOR{ $iter = 0, 1, \hdots$}
\STATE Sample a batch $b_{iter}$ of $(s, a, s', a')$ tuples from $B$
\STATE Calculate empirical estimate of the objective function in \eqref{eqn:kernelRegularizedObjective} using all $(s, a, s', a')$ $\in b_{iter}$ as:
\begin{equation}
    \sum_{(s', a')} \sum_{(s, a)} \big[ \hat{P}(s', a'|s, a) - \pi_{\theta}(a'|s') \hat{T}(s'|s,a)\big]^{2} + \lambda \sum_{s'} \sum_{a'} \pi_{\theta}(a'|s') log(\pi_{\theta}(a'|s'))
    \label{empiricalObjectiveFunction}
\end{equation}
\STATE Update the policy parameter $\theta$ using gradient update to minimize the calculated empirical estimate of the objective function
\ENDFOR
\STATE \textbf{return} $\theta^{*}$
\end{algorithmic}
\end{algorithm*}

\begin{comment}
\begin{table*}[t]
    \begin{center}
        \begin{tabular}{||c|c|c|c|c|c||} 
             \hline
              Environments & Observation Space & Action space & Demonstrator & Random Perf. & Demonstrator Perf.\\ [0.5ex] 
             \hline \hline

             CartPole-v1 & Continuous $(4)$& Discrete $(2)$ & SAC Agent & $22.37 \pm 11.76$ & $500.00 \pm 0.00$\\ 
             Acrobot-v1 & Continuous $(6)$& Discrete $(3)$ & A2C Agent & $-499.11 \pm 9.884$ & $-84.77 \pm 28.30$\\ 
             LunarLander-v2 & Continuous $(8)$& Discrete $(4)$ & D3QN Agent & $-178.29 \pm 106.02$ & $270.18 \pm 19.91$\\ 
             \hline
        \end{tabular}
    \end{center}
\caption{Summary of \textit{Environments}. Demonstrator and random performance metrics are calculated using $1,000$ episodes.}
\label{table:1}
\end{table*}
\end{comment}

\subsection{Transition Density Estimation} \label{sec:TDE}

We now discuss how to use kernel density estimation methods for estimating the two conditional densities $P_{\pi_{D}}$ and $T$. We first discuss the discrete state and action space setting where the form of estimates is simple and intuitive. We then discuss the continuous setup.
%{continuous state and action space setting}. 
% The method can be extended to continuous actions as well. 
%The implementation of our algorithm differs significantly in complexity depending on the nature  of the state and action spaces. In this work, we restrict to discrete action spaces throughout, while considering both discrete and continuous state spaces.

%\subsubsection{Discrete State Space and Discrete Action Space.}
\subsubsection{Discrete Spaces.}
% In a discrete setup, where the state and action spaces are easily measurable, 
For environments where both the state and action spaces are discrete, the estimates $\hat{P}$ and $\hat{T}$
%of the probabilities required in \eqref{eqn:kernelRegularizedObjective} 
can be calculated as:
\begin{equation}
    \begin{gathered}
    \hat{T}(s'|s,a):= \frac{\eta (s, a,s' )}{\eta (s, a)},~~\text{and}~~
    \\
    \hat{P}(s', a'|s, a):= \frac{\eta (s, a, s', a')}{\eta (s, a)}
    \end{gathered}
    \label{eqn:discreteProbabilites}
\end{equation}
where $\eta$ denotes the counting measure, i.e., the number of times a given tuple or sequence appears in the dataset $D$. 
If the counting measures in the denominator are zero when that state-action pair is not visited at all in the given dataset, it implies that we have no information about transitions from it. In such cases, we will take the conditional density to be uniform. 
%Denoting $0/0$ as $0$ in the above definitions makes the most sense 
%[\AN{what do we do if the denominator is 0?}
%\rishabh{We assume a uniform distribution over all the possible transitions from that (s, a). I wrote this in the discrete case of results part but should we move it here?} \AN{just add a short sentence handling the case of 0 in the denominator}
%\rishabh{For $(s, a)$ tuples with no available samples in the data, we assume a uniform distribution across possible transitions from them. Is this okay?}]

% \subsubsection{Continuous State Space and Discrete Action Space.}
\subsubsection{Continuous Spaces.}
Estimation of transition densities in this setting is more challenging since no visited state would appear twice, and most of the states would never be visited in any given dataset. This, thus calls for conditional density estimation using more sophisticated methods. 

% This challenge hinders the direct extraction of the true transition probability density $T$ of the MDP and the expert's probability density $P$.
%in \eqref{eqn:kernelRegularizedObjective}
% To render an approximation feasible, we turn to conditional density estimators.
These include a range of techniques such as parametric approaches like mixture density network \cite{MDN}, normalizing flows \cite{normalizingFlows}; non-parametric methods like Gaussian process conditional density estimation \cite{ckdeAlternative3}, CKDE \cite{NonparametricEconometrics}; and semi-parametric methods like least squares conditional density estimation \cite{LSCKDE}. In this study, we opt for using CKDE since it  is a closed-form, non-parametric method that can be easily implemented and adapted to different data types. Further, CKDE  provides a consistent estimator under appropriate conditions \cite{multivariateKernelBook}.

%While the conventional formulation of the CKDE in \eqref{eqn:ConditionalKernelBasicEqn} involves utilizing the difference between two samples (such as $x-x_{i}$) as input for the kernel functions, this disparity can be substituted with any appropriate distance metric \cite{distanceMetric}. 
As described in Section \ref{sec:ckde}, the kernel functions use a  difference between two samples/values (e.g., $x-x_{i}$) as their argument \eqref{eqn:MultivariateConditionalKernel}. This difference can be alternatively replaced by a suitable distance metric, as indicated in prior work \cite{distanceMetric}.
We define three distinct distance metrics: one to assess the dissimilarity between (next state, next action) pairs, another for (state, action) pairs, and a final one for next states. These metrics are denoted as $d_{1} : (S \times A) \times (S \times A) \rightarrow \mathbb{R_{+}}$, $d_{2} : (S \times A) \times (S \times A) \rightarrow \mathbb{R_{+}}$, and $d_{3} : S \times S \rightarrow \mathbb{R_{+}}$ respectively. Similarly, we define $H_1$, $H_2$, and $H_3$ as bandwidth matrices for the kernels $K_{H_1}$, $K_{H_2}$, and $K_{H_3}$, respectively. $H_1$, $H_2$, and $H_3$ are square matrices with dimensions matching those of the $(s', a')$ pair, $(s, a)$ pair, and $s'$, respectively. The CKDE approximations $\hat{P}$ and $\hat{T}$
%to $P$ and $T$ in (\ref{eqn:kernelRegularizedObjective}) 
are then computed as 
% using (\ref{eqn:ConditionalKernel}) and (\ref{eqn:MultivariateConditionalKernel}) as:
%\begin{flalign*} % <-- observe: 'flalign*', not 'align*'
%  &\color{red} x^2+2x  & \\
%  &\color{blue}x^2+2x  &
%\end{flalign*}
\begin{comment}
\begin{flalign} % <-- observe: 'flalign*', not 'align*'
  & \hat{P}((s',a')|(s,a)) =  & \\
  & \frac{\sum_{l=1}^{D} K_{s'a'}\bigg(\frac{d_{s'a'}((s', a'), (s_l', a_l'))}{h_{s'a'}}\bigg) . K_{sa}\bigg(\frac{d_{sa}((s, a), (s_l, a_l))}{h_{sa}}\bigg)}{\sum_{l=1}^{D} K_{sa}\bigg(\frac{d_{sa}((s, a), (s_l, a_l))}{h_{sa}}\bigg)}  & \\
  & \hat{T}(s'|(s,a)) = & \\
  & \frac{\sum_{l=1}^{D} K_{s'}\bigg(\frac{d_{s'}(s', s_l')}{h_{s'}}\bigg) . K_{sa}\bigg(\frac{d_{sa}((s, a), (s_l, a_l))}{h_{sa}}\bigg)}{\sum_{l=1}^{D} K_{sa}\bigg(\frac{d_{sa}((s, a), (s_l, a_l))}{h_{sa}}\bigg)}  & \\
  \label{kernelEstimators}
\end{flalign}
\end{comment}
\begin{equation}
    \begin{gathered}
    \hspace*{-5.3cm}   \hat{P}(s',a'|s,a) = \\
    \frac{\sum_{l=1}^{n} K_{H_{1}}\big(d_{1}((s', a'), (s_l', a_l'))\big)  K_{H_{2}}\big(d_{2}((s, a), (s_l, a_l))\big)}{\sum_{l=1}^{n} K_{H_{2}}\big(d_{2}((s, a), (s_l, a_l))\big)},
    \vspace{0.5cm}
    \\
%     \end{gathered}
%     \end{equation}
% \begin{equation}
%     \begin{gathered}
    \hspace*{-6cm}   ~\text{and}~~\hat{T}(s'|s,a) = \\
    \frac{\sum_{l=1}^{n} K_{H_{3}}\big(d_{3}(s', s_l')\big)  K_{H_{2}}\big(d_{2}((s, a), (s_l, a_l))\big)}{\sum_{l=1}^{n} K_{H_{2}}\big(d_{2}((s, a), (s_l, a_l))\big)}.
    \end{gathered}
    \label{eqn:kernelEstimators}
\end{equation}

We combine the transition estimation procedures of \eqref{eqn:discreteProbabilites} and \eqref{eqn:kernelEstimators} with the balance equation based optimization problem in \eqref{eqn:kernelRegularizedObjective} in our conditional kernel imitation learning (CKIL) algorithm whose pseudo-code is presented in Algorithm \ref{alg:CKIL}.

\subsection{Theoretical Guarantees}\label{subsec:Theory}

% \rjain{Not touched by me.}
%\input{AnonymousSubmission/LaTeX/theory_old}
In this section, we focus on the conditional kernel density estimation part of our approach and show that, as the training dataset size $n$ approaches infinity, the CKDE estimates in \eqref{eqn:kernelEstimators} converge towards the corresponding true conditional distributions. 

%Some notations to be used include:  If $A = (a_{ij}) \in M_{r \times s}$, the vector $\text{vec A} \in \mathbb{R}^{rs}$ is formed by stacking the columns of matrix A one below another, given as $\text{vec A} = (a_{11}, \ldots, a_{r1}; \ldots; a_{1s}, ..., a_{rs})$. The differential operator concerning $\boldsymbol{x}$, denoted as $\mathrm{D}=\left(\partial / \partial x_1, \ldots, \partial / \partial x_d\right)$, leads to the definition of the $q$-th order differential operator $\mathrm{D}^{\otimes q}$, representing the formal $q$-fold Kronecker product of $\mathrm{D}$ with itself. This is carried out under the understanding that the multiplication of two first-order differentials signifies a second-order differential: $\left(\partial / \partial x_i\right)\left(\partial / \partial x_j\right)=$ $\partial^2 /\left(\partial x_i \partial x_j\right)$. $I_{d}$ is the $d \times d$ identity matrix, and $|A|$ denotes the determinant of $A$.

%Moreover, this notation is consistent with the aforementioned multiplication convention, since the Hessian matrix can be expressed as $\mathrm{H}=\mathrm{DD}^{\top}$ and one of the properties of the Kronecker product is that $\operatorname{vec}\left(\boldsymbol{a} \boldsymbol{b}^{\top}\right)=\boldsymbol{b} \otimes \boldsymbol{a}$ for any $\boldsymbol{a}, \boldsymbol{b} \in \mathbb{R}^d$. See Appendix B for a brief summary of the properties of the vec operator, the Kronecker product and other special matrices, which we will reuse often in the sequel.

Here we introduce the kernel functions $K_i$, $i\in\{1,2,3\}$, so that using \eqref{eqn:K_H_def}, the kernels $K_{H_i}$ for $i\in\{1,2,3\}$ appearing in \eqref{eqn:kernelEstimators}
\begin{equation}\label{eqn:K_i_def}
    K_{H_i}(x) = |H_i|^{-\frac{1}{2}} K_i(H^{-\frac{1}{2}} x),
\end{equation}
where $x$ is of appropriate dimension.
%We will give a detailed outline of proof for Theorem \ref{thm1} under the following assumptions:
Our theoretical guarantee holds under the following assumptions \cite{multivariateKernelBook}:

\begin{description}
    \item[(A1)] Suppose the buffer $B$ in Algorithm \ref{alg:CKIL} consists of $n$ iid tuples $(s,a,s',a')$ generated according to a probability distribution $P(s,a,s',a') = \mu(s,a)P_{\pi_D}(s', a'|s,a)$, where $P_{\pi_D}$ is the  transition probability density of the induced Markov chain on the state-action space under the demonstration policy $\pi_D$ (see \eqref{eqn:policy_balance}) and $\mu$ is a reference measure on $(s,a)$. Further, $P$ has a  density function $g$ that is
    %\nate{$g$ is the true density function?} 
    %\rishabh{Yes}
   square-integrable and twice differentiable, with all of its second-order partial derivatives bounded, continuous and square integrable. Also assume that the marginals $P(s', s, a)$ and $P(s, a)$ satisfy these properties.
    \item[(A2)] The kernels $K_i$ for $i\in\{1,2,3\}$ in \eqref{eqn:K_i_def} are square integrable, zero-mean, spherically symmetric, and with common finite second-order moment $\int_{\mathbb{R}^{m_i}}zz^TK_i(z)dz=\sigma^2I_{m_i}$. 
    % this means that $\int_{\mathbb{R}^d} z K_{H_i}(z) d z=\mathbf{0}$ and that  $\int_{\mathbb{R}^d} z_i^2 K_{H_i}(\boldsymbol{z}) d \boldsymbol{z} =m_2(K_{H_i})$ for all $i=$ $1, \ldots, d$ (where $m_2(K_{H_i})$ is a kernel-dependent constant).

    %$\int_{\mathbb{R}^d} \boldsymbol{z}^{\otimes 2} K(\boldsymbol{z}) d \boldsymbol{z}=m_2(K) \operatorname{vec} \mathbf{I}_d$ with
    \item[(A3)] For each kernel $K_{H_i}$ as defined in \eqref{eqn:K_H_def}, the bandwidth matrices $H_i(n)$ (where $n$ is the number of tuples in $B$) form a sequence of positive definite, symmetric matrices such that  $H_i(n) \rightarrow 0$ and $n^{-1/2}|{H_i(n)}|^{-1 / 2} \rightarrow 0$ as $n \rightarrow \infty$. 
    %\nate{Is it $n^{-1}|H_n|^{-1 / 2}\rightarrow 0$?}
\end{description}

\begin{theorem}
    Suppose assumptions (A1)-(A3) are true. Let $\widehat{P}_n$ and $\widehat{T}_n$ be the CKDE estimates constructed using \eqref{eqn:kernelEstimators} and a buffer $B$ with $n$ tuples. Then, for each $(s,a,s',a')$, as $n \rightarrow \infty$,
    \begin{align}
           &\widehat{P}_n(s', a' | s, a) {\stackrel{P}{\parbox{1.0cm}{\rightarrowfill}}} P_{\pi_D}(s', a' | s, a),
            \notag \\
            &\widehat{T}_n(s'| s, a) {\stackrel{P}{\parbox{1.0cm}{\rightarrowfill}}} T(s'| s, a).
     \end{align}
    \label{thm1}
\end{theorem}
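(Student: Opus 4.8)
The plan is to recognize that both $\widehat{P}_n$ and $\widehat{T}_n$ in \eqref{eqn:kernelEstimators} are ratios of two ordinary multivariate kernel density estimators of the form \eqref{eqn:MultivariateConditionalKernel}. Multiplying the numerator and denominator of each estimator by $1/n$ leaves the ratio unchanged but exhibits each factor as a standard KDE: the common denominator becomes the KDE of the marginal density $P(s,a)$, while for $\widehat{P}_n$ the numerator becomes the KDE of the joint density $P(s,a,s',a')$ and for $\widehat{T}_n$ the numerator becomes the KDE of the marginal $P(s,a,s')$. It therefore suffices to (i) establish pointwise consistency of each individual KDE and then (ii) pass to the ratio via Slutsky's theorem.

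For step (i), I would invoke the standard mean-squared-error analysis of multivariate KDEs under (A1)--(A3). At a fixed evaluation point, decompose the mean squared error into squared bias and variance. Under (A1) each relevant target density ($P(s,a,s',a')$, $P(s,a,s')$, and $P(s,a)$, whose smoothness is assumed precisely in (A1)) is twice continuously differentiable with bounded second-order partial derivatives, so a second-order Taylor expansion of the estimator's expectation, combined with the zero-mean, spherically symmetric, finite-second-moment kernel assumption (A2), yields a pointwise bias of order $\operatorname{tr}(H_i(n))$, which vanishes because $H_i(n)\to 0$ by (A3). The variance is of order $n^{-1}|H_i(n)|^{-1/2}$, which vanishes since $n^{-1/2}|H_i(n)|^{-1/2}\to 0$, and hence $n^{-1}|H_i(n)|^{-1/2}\to 0$, by (A3). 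Thus the MSE tends to zero at the point, and Chebyshev's inequality upgrades this to convergence in probability of each individual KDE to its target density.

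For step (ii), fix $(s,a,s',a')$ at which $P(s,a)>0$. Writing $\widehat{P}_n = \widehat{g}^{\,\mathrm{num}}_n/\widehat{g}^{\,\mathrm{den}}_n$, step (i) gives $\widehat{g}^{\,\mathrm{num}}_n \xrightarrow{P} P(s,a,s',a')$ and $\widehat{g}^{\,\mathrm{den}}_n \xrightarrow{P} P(s,a)$. Since convergence in probability to constants is automatically joint, the continuous mapping theorem applied to $(u,v)\mapsto u/v$, which is continuous at $\bigl(P(s,a,s',a'),P(s,a)\bigr)$ because $P(s,a)>0$, yields $\widehat{P}_n \xrightarrow{P} P(s,a,s',a')/P(s,a) = P_{\pi_D}(s',a'|s,a)$. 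The identical argument with the numerator estimating $P(s,a,s')$ gives $\widehat{T}_n \xrightarrow{P} P(s,a,s')/P(s,a) = T(s'|s,a)$, completing the proof.

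The step requiring the most care is the ratio structure: I must ensure the denominator is bounded away from zero in the limit, which is guaranteed only at points where $P(s,a)>0$, and justify that the joint limiting behavior of numerator and denominator permits the continuous-mapping/Slutsky step. A secondary technical point is that the kernels in \eqref{eqn:kernelEstimators} take a distance metric $d_i$ as argument rather than the vector difference $x-x_i$ used in \eqref{eqn:MultivariateConditionalKernel}; to apply the consistency analysis verbatim I would restrict to radial kernels (consistent with the spherical symmetry in (A2)) with $d_i$ the Euclidean (or Mahalanobis) distance, so that $K_{H_i}(d_i(\cdot,\cdot))$ coincides with the standard $K_{H_i}(\cdot-\cdot)$ and the bias and variance expansions carry over unchanged.
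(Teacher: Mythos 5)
Your proposal follows essentially the same route as the paper's proof: both view $\widehat{P}_n$ and $\widehat{T}_n$ as ratios of a joint KDE over a marginal KDE, establish pointwise consistency of each factor under (A1)--(A3) via the standard bias--variance (MISE) analysis (the paper does this by verifying the conditions of a cited lemma for the product kernel and block-diagonal bandwidth matrix, where you sketch the expansion directly), and then pass to the ratio by the continuous mapping theorem. Your added caveats about requiring $P(s,a)>0$ at the evaluation point and about reconciling the distance-metric kernel arguments with the standard difference form are sound refinements that the paper leaves implicit.
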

% \emph{Proof outline} \AN{should do separate consistency argument for each (a, a')......}

\begin{proof}
% \AN{Old proof....}
%     Mean Integrated Squared Error (MISE), which assesses the global performance between a kernel estimator $\hat{g}(\boldsymbol{x} ; \mathbf{H})$ and the target density $g(\boldsymbol{x})$ is given as: 
% \AN{why do we have x here?}
%     \begin{equation*}
%         \operatorname{MISE}\{\hat{g}(\cdot ; \mathbf{H})\}=\mathbb{E} \int_{\mathbb{R}^d}\{\hat{g}(\boldsymbol{x} ; \mathbf{H})-g(\boldsymbol{x})\}^2 d \boldsymbol{x} .
%     \end{equation*}
% \AN{why use MISE? isn't there a consistency result about convergence in probability in Chacon and Duong? I seem to recall seeing it in an earlier version of the write-up- why not directly use that?}
% \rishabh{We can remo it. These is just a 1-2 line summary of how the proof works}
%     By considering $\operatorname{vec} \mathbf{H_n} \rightarrow 0$ and $n^{-1}|\mathbf{H_n}|^{-1 / 2} \rightarrow 0$ as $n \rightarrow \infty$ from assumption $(A3)$, we conclude that $\operatorname{MISE}\{\hat{g}(\cdot ; \mathbf{H})\} \rightarrow 0$ asymptotically \cite{multivariateKernelBook}.

%     \AN{Let  $\hat{f}(\cdot ; )$ (numerator) and $\hat{g}(\cdot ; G_{2}, n)$ (denominator) represent KDEs of $P(s,a,s')$ and $P(s,a)$ where G1 G2 are block-diagonal.....and $P(s,a,s')$ and $P(s,a)$ are marginals of $P(s,a,s'a')$ mentioned in A1. Then, using the arguments in Section 1.6 of Reference, we have ,.....WRITE Convergence in probability statements... }

Let $\hat{f}(\cdot)$ and $\hat{g}(\cdot)$  represent the numerator and denominator of $\hat{T}(s'|s,a)$ respectively from \eqref{eqn:kernelEstimators}, i.e.,
    % Let  $\hat{f}(\cdot)$ (numerator) and $\hat{g}(\cdot)$ (denominator) represent KDEs of $P(s,a,s')$ and $P(s,a)$ from \eqref{kernelEstimators} as follows:
    \begin{equation}
    \begin{gathered}
        \hat{f}(s', s, a) = \sum_{l=1}^{n} K_{H_{3}}\big(d_{3}(s', s_l')\big)  K_{H_{2}}\big(d_{2}((s, a), (s_l, a_l))\big),
        \\
        \hat{g}(s, a) = \sum_{l=1}^{n} K_{H_{2}}\big(d_{2}((s, a), (s_l, a_l))\big).
        \end{gathered}
        \label{fAndGEstimate}
    \end{equation}
    $\hat{f}(\cdot)$ and $\hat{g}(\cdot)$ are, in fact, the KDEs of $P(s,a,s')$ and $P(s,a)$, where  $P(s,a,s')$ and $P(s,a)$ are marginals of $P(s,a,s',a')$ mentioned in Assumption (A1). 

    We now show why Lemma \ref{thm2}, given in Appendix, can be adopted to prove this theorem under assumptions (A1)-(A3). We argue as follows:

\begin{enumerate}
    \item We assume (A1) that $P(s, a, s', a')$ has a  density function $g$ that is square-integrable and twice differentiable, with all of its second-order partial derivatives bounded, continuous and square integrable and so does its marginals $P(s', s, a)$ and $P(s, a)$. This leads to the satisfaction of condition (C1).

    \item From assumption (A2), $\int_{\mathbb{R}^{m_i}}zK_{i}(z) dz=0$ for $i = \{2,3\}$, where $z_i$ is a vector of size $m_i$. Partition the vector $z$ as $z=[z_3,z_2]$ and let $m=m_2+m_3$ and $K(z) = K_3(z_3)K_2(z_2)$.
    % Using the fact that the vector $z$ can be written as a vector with entries [$z_{31}, z_{32}, \hdots, z_{3m_{H_3}}, z_{21}, z_{22}, \hdots, z_{2m_{H_2}}$], such that $m_{H_3} + m_{H_2} = m$ and $\boldsymbol{z_{3}} = [z_{31}, z_{32}, \hdots z_{3m_{H_3}}]$, $\boldsymbol{z_{2}} = [z_{21}, z_{22}, \hdots z_{2m_{H_2}}]$. 
    Then for $t \leq m_3$, 
   \begin{equation}
    \begin{split}
        &\int_{\mathbb{R}^m} z_{t} K(z) dz= \int_{\mathbb{R}^m} z_{t} K_{3}(z_3) K_{2}(z_2) dz\\
        &\quad= \int_{\mathbb{R}^{m_2}} K_{2}(z_2) dz_2\int_{\mathbb{R}^{m_3}}z_{t} K_3(z_3)dz_3 \\
        &\quad= \int_{\mathbb{R}^{m_3}}z_{t} K_3(z_3)dz_3=0,
        % K_{2}}(z) d z_2 
        %  \cdot \int_{\mathbb{R}^{m_{3}} z_{t} K_{\boldsymbol{H_3}}(z_3)  d z_3 = 0,
    \end{split}
   \end{equation}
    which follows from (A2). This can be shown for any $t \in \{1, 2, \hdots, m\}$. Hence, $\int_{\mathbb{R}^m} z K(z) dz=0$ is satisfied corresponding to condition (C2). 
    
    Now,
    \begin{equation*}
    % \small
    % \begin{gathered}
        % \hspace{-5cm}
    \begin{split}
        &\int_{\mathbb{R}^m} zz^{T} K(z) dz\\
        % \hspace{-1.5cm}
        &\qquad=
        \int_{\mathbb{R}^m}\begin{bmatrix}
        z_3z_3^T  & z_3z_2^T\\
        z_2z_3^T  & z_2z_2^T\\
        \end{bmatrix}
        K_{3}(z_3) K_{2}(z_2)
    dz_3  dz_2\\
    %     \vspace{0.3cm}
    % \\
    % \hspace{-1.1cm}
    &\qquad= \sigma^2 \begin{bmatrix}
        I_{m_3}  & 0\\
        0  & I_{m_2}
    \end{bmatrix} = \sigma^2I_m.
    \end{split}
    % \end{gathered}
    \end{equation*}
    % Using $m_{2}(K_{H_{3}}) = m_{2}(K_{H_{2}})$, we can rewrite $\int_{\mathbb{R}^m} \mathbf{z}\mathbf{z^{T}} K_{\boldsymbol{H}}(\mathbf{z}) d \boldsymbol{z} =m_2(K_{\boldsymbol{H}})$$\boldsymbol{I_{m}}$ to satisfy condition (C2).
%{\color{red} Hence, it seems for the second order integral, $zz^{T}$, we need $m_{2}(K_{H_{3}}) = m_{2}(K_{H_{2}})$.}
Hence, $K(z) = K_3(z_3)K_2(z_2)$ satisfies condition (C2).
    
    \item Consider $H(n)$ to be a block diagonal matrix with $H_3(n)$ and $H_{2}(n)$ as the two block diagonal entries with $H_3(n)$ and $H_2(n)$ satisfying assumption (A3). Then the matrices $H(n)$ form a sequence of positive definite, symmetric matrices. Using \eqref{eqn:MultivariateConditionalKernel} with this $H$, the kernel estimate for $P(s', s, a)$ takes the product kernel form as seen for $\hat{f}(\cdot)$ in \eqref{fAndGEstimate}. Now, $|H(n)| = |H_3(n)| |H_{2}(n)|$, this implies that as $n \rightarrow \infty$, $n^{-1}|H(n)|^{-1 / 2} \rightarrow 0$ because $n^{-1/2}|H_{i}(n)|^{-1 / 2} \rightarrow 0$ for $i = \{2,3\}$. Also, vec $H(n) \rightarrow 0$ as vec $H_{i}(n) \rightarrow 0$ for $i = \{2,3\}$. Therefore, condition (C3) is satisfied.

\begin{comment}
$\int_{R^{m}} $\boldsymbol{z}$d\boldsymbol{z}$ =
\begin{bmatrix}
    \int {z_{1}dz_{1}dz_{2} \hdots dz_{m} \\
    \int z_{2}dz_{1}dz_{2} \hdots dz_{m} \\
    \vdots\\
    \int z_{m}dz_{1}dz_{2} \hdots dz_{m} 
\end{bmatrix}
\end{comment}

\end{enumerate}
    
    %Then, using the arguments in Section 1.6 of \cite{NonparametricEconometrics} and Sections $2.6$-$2.9$ of \cite{multivariateKernelBook}, we have 
    Having satisfied conditions (C1)-(C3), we may apply the argument found in Sections 2.6-2.9 of \cite{multivariateKernelBook} and conclude that 
    \begin{equation*}
        \begin{gathered}
            %\hspace*{-3.3cm} 
            \hat{f}(s', s, a) {\stackrel{P}{\parbox{1.0cm}{\rightarrowfill}}} P(s', s, a),
            \\
            \hat{g}(s, a) {\stackrel{P}{\parbox{1.0cm}{\rightarrowfill}}} P(s, a).
        \end{gathered}
    \end{equation*}

%*************
\begin{comment}    
    and $\hat{g}(\cdot ; \mathbf{H_{2, n}})$ represent KDEs of $T(s', s, a)$ and $P(s, a)$, respectively, where $P(s, a)$ is the marginal distribution of $T(s', s, a)$. Then, from the previous argument, for a given $(s', s, a)$ tuple, as $n \rightarrow \infty$,
    \begin{equation*}
        \begin{gathered}
            %\hspace*{-3.3cm} 
            \hat{f}((s', s, a) ; \mathbf{H_{1, n}}) {\stackrel{p}{\parbox{1.0cm}{\rightarrowfill}}} P(s', s, a),
            \\
            \hat{g}((s, a) ; \mathbf{H_{2, n}}) {\stackrel{p}{\parbox{1.0cm}{\rightarrowfill}}} P(s, a),
        \end{gathered}
    \end{equation*}
\end{comment}
    It follows from the Continuous Mapping Theorem \cite{mann1943stochastic} that taking the ratio of $\hat{f}$ and 
    $\hat{g}$ produces a consistent estimator of $\frac{P(s', s, a)}{P( s, a)} = T(s'|s,a)$. That is,
    $\widehat{T}_n(s'| s, a) = \frac{\hat{f}(s', s, a)}{\hat{g}(s, a)} {\stackrel{P}{\parbox{1.0cm}{\rightarrowfill}}}  T(s'|s,a)$.\\

    % are again consistent, i.e., for $\hat{h}((s'|(s, a); \mathbf{H_{n}})$, a CKDE of $T(s'| s, a)$
    % \begin{equation*}
    %     \begin{gathered}
    %         %\hspace*{-3.3cm} 
    %         \hat{h}((s'|s, a) ; \mathbf{H_{n}})
    %         {\stackrel{p}{\parbox{1.0cm}{\rightarrowfill}}} 
    %          T(s'|s, a)
    %     \end{gathered}
    % \end{equation*}
    % where $\hat{h}((s'|s, a) ; \mathbf{H_{n}})$ $=$ $\frac{\hat{f}((s', s, a) ; \mathbf{H_{1, n}})}{\hat{g}((s, a); \mathbf{H_{2, n}})}$ and $T(s'|s, a)$ $=$ $\frac{T(s', s, a)}{P(s, a)}$.

    A similar approach can be used for establishing the asymptotic convergence in probability for the CKDE of $P_{\pi_{D}}(s', a'|s, a)$.
    %Therefore, we may ensure that with a sufficienty large dataset $D$, our estimates $\hat{P}$ and $\hat{T}$ will converge to $P$ and $T$ in probability. 

\end{proof}

\begin{figure}[t]
\centering
\includegraphics[width=0.9\columnwidth]{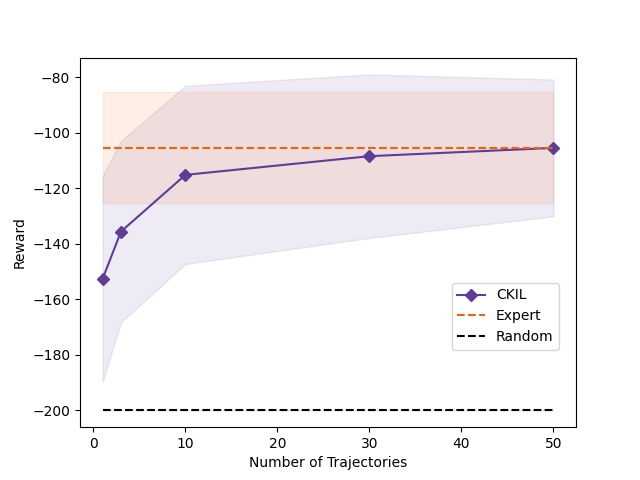} % Reduce the figure size so that it is slightly narrower than the column. Don't use precise values for figure width.This setup will avoid overfull boxes.
\caption{Average rewards attained by the CKIL agent in a discretized MountainCar environment for a varying number of trajectories (higher values indicate better performance).}
\label{fig1}
\end{figure}

\section{Experimental Results}
\label{sec:expts}

\subsection{Experimental Setup}

We now validate the empirical performance of the CKIL algorithm on a number of benchmark environments from Open AI Gym \cite{openAIGym}.  These encompass environments of varying complexities used in the RL literature. These include the MountainCar environment, where the goal is to reach the top of the mountain \cite{mountainCar}; CartPole, which aims to balance a pendulum on a frictionless track \cite{cartpole}; Acrobot, which aims to swing limbs around a system of joints to achieve a specified height \cite{acrobot}; and LunarLander, which aims to optimize a rocket's trajectory to achieve a successful landing \cite{LunarLander}. 

To create a demonstration dataset $\emph{D}$, we generate data using pre-trained and hyperparameter-optimized agents available in the RL Baselines Zoo \cite{rl-zoo3}. In particular, we used a PPO agent for LunarLander-v2, a DQN agent for CartPole-v1 and an A2C agent for Acrobot-v1. 
%\nate{which algorithm used to get demonstration policies?} 
%\rishabh{To add}

\paragraph*{Benchmark Algorithms.}
\begin{comment}
We test our algorithm \ref{alg:CKIL}, CKIL, against several benchmarks from the offline IRL/IL setting. This comprehensive evaluation encompasses Deep Successor Feature Network (DSFN), an offline adaptation of the max-margin IRL algorithm that introduces deep network architecture and least-squares temporal-difference learning, yielding both reward and policy outputs; ValueDICE (VDICE), an adversarial imitation learning methodology tailored for offline scenarios through the elimination of replay regularization; Energy-based Distribution Matching (EDM), an offline imitation learning algorithm that captures the expert's state occupancy measure by training an explicit energy-based model. Lastly, the standard Behavioral Cloning (BC) is incorporated for complete analysis.
\end{comment}
We compare the performance of our CKIL algorithm (Algorithm \ref{alg:CKIL}), with a range of offline IRL/IL/AIL benchmarks. This comprehensive assessment covers a spectrum of methodologies, including the inherently offline Behavioral Cloning (BC); ValueDICE (VDICE), a sample-efficient AIL approach designed for offline scenarios by removing replay regularization; reward-regularized classification (RCAL), a large margin classification approach, which introduces a sparsity-based penalty on inferred rewards to exploit dynamics information; Energy-based Distribution Matching (EDM), an offline imitation learning algorithm that captures the expert's state occupancy patterns through explicit training of an energy-based model; AVRIL, a recent model-free offline IRL technique employing a variational approach to simultaneously learn an approximate posterior distribution over rewards and policies; and Deep Successor Feature Network (DSFN), an offline adaptation of the max-margin IRL algorithm that transcends linear approaches by introducing a deep network architecture and employing least-squares temporal-difference learning to produce both reward and policy outputs.

\begin{figure*} % Use figure* for spanning both columns
    \centering
        \begin{tikzpicture}
    \begin{customlegend}[legend columns=-1]
    \addlegendimage{bcColor,mark=*, thick, mark options={solid,scale=1.5}}
    \addlegendentry{BC}
    \addlegendimage{vdiceColor,mark=triangle*,thick, mark options={solid,scale=1.5}}
    \addlegendentry{VDICE}
    \addlegendimage{rcalColor,mark=|, thick, mark options={solid,scale=1.5}}
    \addlegendentry{RCAL}
    \addlegendimage{edmColor,mark=asterisk, thick, mark options={solid,scale=1.5}}
    \addlegendentry{EDM}
    \addlegendimage{avrilColor,mark=x, thick, mark options={solid,scale=1.5}}
    \addlegendentry{AVRIL}
    \addlegendimage{dsfnColor,mark=triangle*, thick, mark options={solid,scale=1.5,rotate=180}}
    \addlegendentry{DSFN}
    \addlegendimage{ckilColor,mark=diamond*, thick, mark options={solid,scale=1.5}}
    \addlegendentry{CKIL}
    \addlegendimage{expertColor,mark=|*, densely dashed, thick}
    \addlegendentry{Expert}
    \addlegendimage{randomColor,mark=|*, densely dashed, thick}
    \addlegendentry{Random}
    \end{customlegend}
    \end{tikzpicture}
    \begin{subfigure}[b]{0.33\textwidth}
        \includegraphics[width=\linewidth]
        {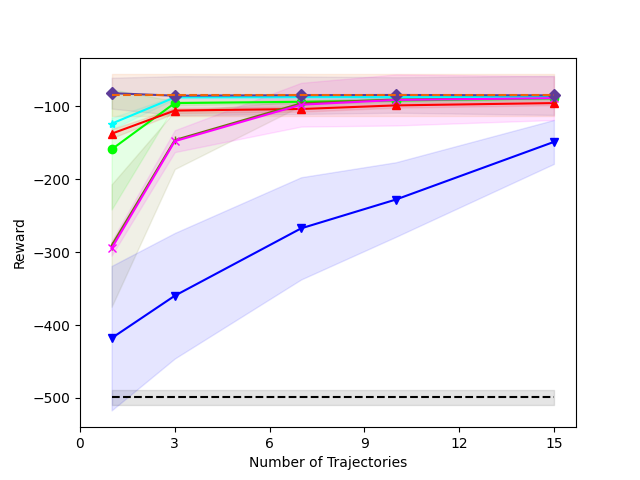}%{AnonymousSubmission/LaTeX/AcrobotPlot.png} % Replace with your figure
        \caption{Acrobot}
    \end{subfigure}
    \hfill
    \begin{subfigure}[b]{0.33\textwidth}
        \includegraphics[width=\linewidth]
        {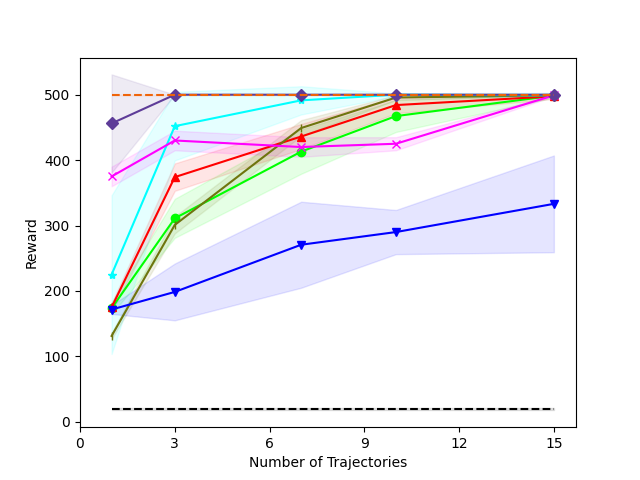}
        %{AnonymousSubmission/LaTeX/CartPolePlot.png} % Replace with your figure
        \caption{CartPole}
    \end{subfigure}
    \hfill
    \begin{subfigure}[b]{0.33\textwidth}
        \includegraphics[width=\linewidth]
        {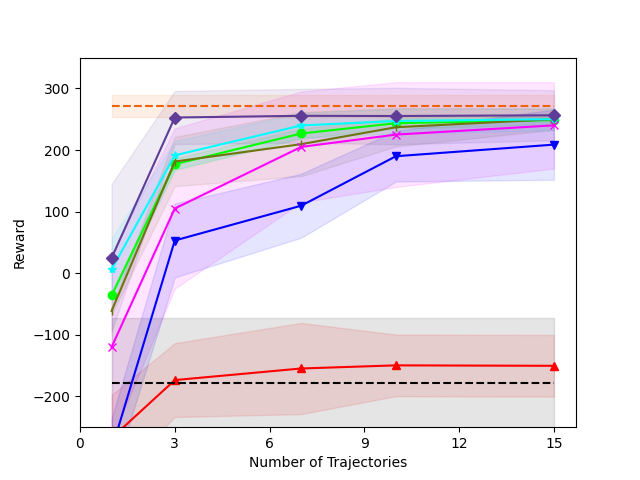}
        %{AnonymousSubmission/LaTeX/LunarLanderPlot.png} % Replace with your figure
        \caption{LunarLander}
    \end{subfigure}

    \caption{Average rewards achieved by benchmark IRL/IL/AIL and CKIL policies during real-time deployment plotted against the number of trajectories included in demonstration dataset $D$ (higher values indicate better performance).}
    \label{fig:gymplots}
\end{figure*}

\paragraph*{Implementation.}
The policy $\pi_{\theta}$ in \eqref{empiricalObjectiveFunction} is embodied by a neural network (NN) architecture. This NN comprises two hidden layers featuring the Rectified Linear Unit (ReLU) activation function. The final layer employs a softmax function to produce a probability distribution over actions when given a state as an input. To facilitate comparison, all benchmarks adopt a common neural network architecture consisting of two hidden layers comprising 64 units each, with Exponential Linear Unit (ELU) activation functions. Training is carried out using the Adam optimizer \cite{adam} with individually tuned learning rates.

In the case of VDICE, we used the publicly available source code provided at \cite{vdiceGithub}. It is worth noting that, for VDICE, offline learning is achieved by configuring the ``replay regularization" coefficient to zero. Our execution of EDM leveraged the source code accessible at \cite{edmGithub}. It is essential to highlight that the contrast between BC and EDM predominantly stems from the introduction of $L_{\rho}$, an occupancy loss defined in the EDM work, while deriving the RCAL loss is a straightforward process involving the inversion of the Bellman equation. As for AVRIL and DSFN, the applicable source codes are accessible at \cite{avrilGithub}, \cite{dsfnGithub} respectively. Additional specifics regarding the experimental setup and benchmarks implementations are available in the Appendix.

\paragraph*{Choice of Kernel.} We utilize the Gaussian kernel due to its universal nature, i.e, its capacity to uniformly approximate arbitrary continuous target functions on any compact subset of the input space \cite{universalGaussian}. 
%\rishabh{Needs to be tuned as per Nate's proof of the consistency theorem}
%\rjain{to be revised:} [
Moreover, its properties such as square integrability, finite second-order moment, and spherical symmetry are desirable properties for density estimation, facilitating the asymptotic convergence of the conditional kernel density estimator to the true conditional density. 
%]

%\rishabh{Please review this paragraph carefully, it will be reviewed pretty carefully}
We consider a Euclidean distance metric for $d_{1}$, $d_{2}$, and $d_{3}$ and utilize a diagonal bandwidth matrix with the same values across its diagonal elements. These matrices can then be denoted as $H_{i} = h_{i}I_{m_{i}}$, where $m_{i}$ is the corresponding appropriate dimension for $i = 1, 2, 3$. Here, each $h_{i}$ is treated as a hyper-parameter and we tuned these in our experimental work depending on what yielded better performance. 
%We would like to note that in prior research \cite{x,y,z}, methods for systematic choice of bandwidth parameters, which should decrease as the dataset size increases have been developed, which can be used for more complicated problems when manual tuning is not possible. 
We would like to emphasize that in prior research \cite{Do-validation, Scholkopf2002, Silverman86, MixingTheEstimators},
approaches for systematic selection of bandwidth parameters, which should decrease as the dataset size grows, have been developed. These methods can be applied to more intricate problems where manual tuning is impractical. Specific values of $h_{i}$ employed for various experiments are detailed in the Appendix.

%Following prior research, we acknowledge that the bandwidth parameter should decrease as the sample size increases. Specific values of $h_{n}$ employed for various experiments are detailed in the appendix.

%{\color{red}[TO DO] Bandwidth stuff [median heuristics], rule of thumb, etc, euclidean distance, diagonal bandwidth matrix}

\begin{comment}
\begin{table}[t]
    \begin{center}
        \begin{tabular}{||c|c||} 
             \hline
             Trajectories $\tau$ & Avg. Rewards\\
             \hline \hline
             $1$ & $-152.67 \pm 37.06$ \\
             \hline 
             $3$ & $-135.55 \pm 32.69$ \\
             \hline
             $10$ & $-115.12 \pm 32.10$ \\
             \hline
             $30$ & $-108.38 \pm 29.47$ \\
             \hline
             $50$ & $-105.40 \pm 24.64$ \\
             \hline
        \end{tabular}
    \end{center}

\caption{\rjain{This needs to be removed and replaced by plot for discrete state MountainCar.} Average rewards attained by the agent in a discretized MountainCar-v0 environment for varying number of trajectories $\tau$. Average reward attained by a random policy is $-200.0$ and that by the expert is $-105.34 \pm 20.01$ (higher values indicate better performance).}
\label{table:1}
\end{table}
\end{comment}

\subsection{Results}

\subsubsection{Discrete States and Actions.} We first consider the discretized mountain car problem \cite{mountainCar} as  an example of a  discrete state and action space environment. 
%While our algorithm primarily centers on leveraging kernel-based estimators tailored for continuous state spaces, we provide a succinct overview in the discrete case to ensure comprehensive coverage. 
%We evaluate our algorithm on a discretized mountain car problem \cite{mountainCar}. 
To that end, we transformed the original continuous $2$-dimensional state space of the MountainCar environment into a grid configuration measuring $15$ by $15$. Subsequently, we  estimate $\hat{P}$ and $\hat{T}$ using equation \eqref{eqn:discreteProbabilites}. The dataset $D$ is generated using the policy outlined in \cite{mountainCarImplementation}. We begin with one trajectory in $D$, and increase to $50$ trajectories, with observations summarized in Figure \ref{fig1}. With more data, we see that the CKIL agent's performance improves and  achieves expert-level proficiency with 50 trajectories.

%We use an $\epsilon$-optimal policy for demonstrator and vary this parameter to observe how our agent adapts to diverse distributions of trajectories that result from altering the randomness in the expert's actions. We implement an $\epsilon$-optimal policy for the demonstrator and vary this parameter to study how our agent responds to suboptimal data generated by increasing the randomness in the previously deterministic expert's actions. An $\epsilon$-optimal policy is defined as taking the optimal action with probability $1 - \epsilon$ and a random action with probability $\epsilon$. $D$ is generated using the policy outlined in \cite{mountainCarImplementation}. We report the average reward observed for the agent and the demonstrator for different values of $\epsilon$ in Table \ref{table:2}, when trained on $10$ trajectories. We observe that the agent's performance aligns with the demonstrator's proficiency at $\epsilon = 0$, effectively equating to the demonstrator's expert status. As $\epsilon$ increases, the agent maintains its competence in handling suboptimal data, consistently performing as though an expert provided it. At $\epsilon = 0.3$, a decline in the agent's performance becomes evident, converging toward the suboptimal demonstrator's level. However, it is noteworthy that elevating the dataset's trajectory count from 10 to 20 reverts the agent's performance to levels akin to other instances with small $\epsilon$ values.

\subsubsection{Continuous States.}
%\rishabh{Should we add a line that we are now in the section, we are most interested in??}
We next address our main goal, namely imitation learning for continuous state environments. When provided ample demonstration data, all benchmarks exhibit the capability to attain performance comparable to the demonstrator level. Thus, we evaluate the algorithms' capacity to manage sample complexity in scenarios with limited data. To achieve this, we assess their performance while granting access to a specific number of trajectories, which we vary. This setup mirrors the configuration described in \cite{SBIL}. We vary the amount of demonstration data $\emph{D}$, ranging from a single trajectory to 15, to illustrate the sample complexity concept. 

The algorithms used a batch of 1, 3, 7, 10, or 15 trajectories, each uniformly drawn from a pool of 1000 expert trajectories. Subsequently, each algorithm underwent a training phase until convergence. For testing, 300 live roll-outs were conducted in the simulated environment, during which the average accumulated reward for each episode was measured. This entire process was repeated 10 times, utilizing diverse initialization and observed trajectories in each iteration.
%, akin to the methodology employed in \cite{chan2021}.

Figure \ref{fig:gymplots} illustrates the average rewards for policies learned by different algorithms with increasing numbers of demonstration trajectories. Across all tasks, the results showcase CKIL's capability to learn effective policies, manifesting robust and consistently superior performance compared to other well-known algorithms, especially when data is scarce. 
%Remarkably, even with a single expert trajectory, CKIL achieves expert-level performance in Acrobot, and comes close to expert-level performance in CartPole. With just three trajectories, it achieves expert-level performance in CartPole. 

Remarkably, even with a mere single expert trajectory, CKIL manages to attain expert-level performance in the Acrobot environment, and it approaches expert-level performance in the CartPole environment. Moreover, employing only three trajectories enables it to achieve expert-level performance on the CartPole environment.
In the LunarLander environment, a substantial performance enhancement becomes evident within $3$ trajectories. At this juncture, the agent's performance nearly matches that of an expert, surpassing all benchmark algorithms by a considerable margin.

Additionally, it is worth highlighting that within the confines of this exclusively batch-oriented context, the off-policy adaptations of online algorithms (VDICE, DSFN) do not exhibit the same degree of consistent performance as their inherently offline counterparts. This highlights the inadequacy of solely adopting online algorithms in offline scenarios. Furthermore, the challenges associated with estimating the expectation of an exponential distribution may contribute to VDICE's potential under-performance when compared to the baseline behavioral cloning algorithm. 
%\rishabh{Write a line about notable performance improvement in LunarLander}

\begin{comment}
\begin{equation*}
   \pi(a|s) = \begin{cases}
    1 - \epsilon + \frac{\epsilon}{|\mathcal{A}|}, & \text{if}\ a^* = \arg\max_a Q(s, a) \\
    \frac{\epsilon}{|\mathcal{A}|}, & \text{otherwise}
\end{cases} 
\end{equation*}
\end{comment}

%\rishabh{Should we then just say that we report the findings in the Appendix or should we provide the results here?}
%{\color{red} [TO DO] We report the results in Table \ref{table:2} [ADD THE FINDINGS]}.

\begin{comment}
\begin{table}[t]
    \begin{center}
        \begin{tabular}{||c|c|c||} 
             \hline
              $\epsilon$ & Demonstrator & Agent \\ [0.5ex] 
             \hline \hline
             $0.0$ & $-105.34 \pm 20.01$ & $-105.48 \pm 1.84$\\
             \hline
             %$0.05$ & $-116.30 \pm 23.41$ & \\
             $0.10$ & $-127.73 \pm 24.45$ & $-106.15 \pm 1.81$\\
             \hline
             %$0.15$ & $-139.14 \pm 25.43$ & \\
             $0.20$ & $-150.46 \pm 24.84$ & $-106.59 \pm 2.19$\\
             \hline
             %$0.25$ & $-158.81 \pm 21.30$ & \\
             $0.30$ & $-166.15 \pm 18.67$ & $-167.22 \pm 32.35$\\
             %$0.35$ & $-172.87 \pm 17.87$ & \\
             %$0.40$ & $-180.40 \pm 17.91$ & \\
             %$0.45$ & $-188.60 \pm 15.23$ & \\
             %$0.50$ & $-194.58 \pm 11.25$ & \\
             \hline
        \end{tabular}
    \end{center}
\caption{Average rewards attained by both the demonstrator and the agent in a discretized MountainCar-v0 environment (where higher values indicate better performance). Average reward attained by a random policy is $-200.0$.}
\label{table:2}
\end{table}
\end{comment}

\section{Conclusions}\label{sec:conclusions}

In this paper, we introduce Conditional Kernel Imitation Learning (CKIL), a simple but novel approach to imitation learning for continuous state space problems. The approach depends on finding policies that satisfy a Markov balance equation, and hence incorporates the sequential nature of the problem in a natural way. It uses conditional kernel density estimators for estimating transition densities of the MDP and the induced Markov chain and allows for behavioral policies to be represented as neural networks, thus both together making the applicability of the approach to be almost universal. Furthermore, it does not need access to a generative model or further online interaction data, does not first need reward inference, does not do distribution matching, and allows for batch processing of offline data for scalability. It also makes no assumptions about the form of the transition densities, nor of course of the reward function. The algorithm is supported by theoretical consistency results about the estimators, and furthermore, shows remarkably good empirical performance as compared to almost all state-of-the-art IL, IRL, and AIL algorithms against which comparison is meaningful on a number of continuous state OpenAI Gym benchmark environments. 

There are a number of interesting directions for future work. Since the method relies on kernel density estimation, scalability to high-dimensional problems needs to be addressed. One potential remedy involves exploring a modified version of CKDE designed to enhance computational efficiency \cite{fastCKDE}. Given that our framework is fundamentally rooted in the estimation of conditional probability densities, it readily lends itself to adaptations that can incorporate choices for substituting the CKDE, as evidenced in various alternative approaches \cite{ckdeAlternative1}, \cite{ckdeAlternative2}, \cite{ckdeAlternative3}. In fact, conditional densities themselves could have non-parametric representions. There is also scope for deeper and more detailed theoretical analysis, including non-asymptotic sample complexity bounds. Such bounds are rarely available in the current literature for IL, IRL or AIL algorithms.

\bibliography{aaai23}

\section{Appendix}\label{sec:Appendix}

\subsection{Lemma used in Theoretical Guarantees subsection \ref{subsec:Theory}}

We state the following technical lemma for convergence in probability of kernel density estimators from i.i.d samples which we used for Theorem \ref{thm1}. 
%We provide the detailed proof of Theorem \ref{thm1}. 

\begin{lemma}
    \cite{multivariateKernelBook} 
    Suppose $X_1, X_2, \hdots, X_n$ are i.i.d vectors with probability density $g$. Let $\hat{g}(\cdot ; H)$, as given in Eq. (\ref{eqn:MultivariateConditionalKernel}), be the kernel density estimator constructed from these samples using kernel $K$ and bandwidth matrix $H = H(n)$. Suppose the following assumptions hold.
    \begin{description}
    \item[(C1)] Each entry of $\mathcal{H}_{g}(\cdot)$ be piecewise continuous and square integrable, where $\mathcal{H}_{g}$ is the $m \times m$ \textit{Hessian-matrix} of $g$. 
    %The density function $g$ 
    %\nate{$g$ is the true density function?} 
    %\rishabh{Yes}
    %is square-integrable and twice differentiable, with all of its second-order partial derivatives bounded, continuous and square integrable.
    \item[(C2)] The kernel $K$, is square integrable, spherically symmetric and with a finite second order moment; 
    %this means that $\int_{\mathbb{R}^m} z K(z) dz = 0$ and that $\int_{\mathbb{R}^m} zz^{T} %\boldsymbol{z}^{\otimes 2} 
    %K_{H}({z}) d{z}=m_2(K) \operatorname{vec} \mathbf{I}_d$ with $m_2(K)=\int_{\mathbb{R}^d} z_i^2 K(\boldsymbol{z}) d \boldsymbol{z}$ for all $i=$ $1, \ldots, d$.
    this means that $\int_{\mathbb{R}^m} z K(z) dz=0$ and $\int_{\mathbb{R}^m} zz^{T} K(z) d z =m_2(K)$$I_{m}$ (where $m_2(K)$ is independent of $i$, for $i \in \{1, 2, \hdots m\}$). Furthermore, $\int_{\mathbb{R}^m}K(z) = 1$.
    
    \item[(C3)] The bandwidth matrices $H=H(n)$ form a sequence of positive definite, symmetric matrices such that as $n \rightarrow \infty$, vec $H(n) \rightarrow 0$, i.e. all entries of $H(n)$ approaches $0$ and $n^{-1}|H(n)|^{-1 / 2} \rightarrow 0$, where vec is the vectorization operator which acts on a matrix by stacking its columns on top of one another.
    %\nate{Is it $n^{-1}|H_n|^{-1 / 2}\rightarrow 0$?}
\end{description}
    Then, $\hat{g}(\boldsymbol{x}; H)$ converges in probability to $g(\boldsymbol{x})$ for each $\boldsymbol{x}$.
    \label{thm2}
\end{lemma}

\end{document}